\documentclass{article}

\usepackage{microtype}
\usepackage{graphicx}
\usepackage{subcaption}
\usepackage{booktabs} 

\usepackage{hyperref}

\usepackage[accepted]{icml2026}

\usepackage{amsmath}
\usepackage{amssymb}
\usepackage{mathtools}
\usepackage{amsthm}
\usepackage{pifont}

\usepackage{xurl}
\usepackage{hyperref}

\usepackage[capitalize,noabbrev]{cleveref}

\theoremstyle{plain}
\newtheorem{theorem}{Theorem}[section]
\newtheorem{proposition}[theorem]{Proposition}

\theoremstyle{definition}

\theoremstyle{remark}

\usepackage{wrapfig}

\usepackage{algorithm}
\usepackage{algorithmic}
\usepackage{float}

\usepackage[textsize=tiny]{todonotes}

\usepackage[most]{tcolorbox}
\usepackage{xcolor}
\definecolor{obsblue}{RGB}{242,245,248}   
\definecolor{obsframe}{RGB}{120,145,170} 

\definecolor{formgreen}{RGB}{242,246,242}  
\definecolor{formframe}{RGB}{130,165,140}  

\newtcolorbox{observationbox1}{
  colback=obsblue,
  colframe=obsframe,
  fonttitle=\bfseries,
  title=Observation 1: Sparsity–capacity trade-off,
  boxrule=0.8pt,
  arc=2pt,
  left=6pt,
  right=6pt,
  top=6pt,
  bottom=6pt
}

\newtcolorbox{observationbox2}{
  colback=obsblue,
  colframe=obsframe,
  fonttitle=\bfseries,
  title=Observation 2: Sparsity delays data saturation,
  boxrule=0.8pt,
  arc=2pt,
  left=6pt,
  right=6pt,
  top=6pt,
  bottom=6pt
}

\newtcolorbox{formulationbox}{
  colback=formgreen,
  colframe=formframe,
  fonttitle=\bfseries,
  title=Modeling implication,
  boxrule=0.8pt,
  arc=2pt,
  left=6pt,
  right=6pt,
  top=6pt,
  bottom=6pt
}

\definecolor{formorange}{RGB}{250,242,235}  
\definecolor{takeframe}{RGB}{195,140,95}   

\newtcolorbox{takeawaybox}{
  colback=formorange,
  colframe=takeframe,
  fonttitle=\bfseries,
  title=Takeaway,
  boxrule=0.8pt,
  arc=2pt,
  left=6pt,
  right=6pt,
  top=6pt,
  bottom=6pt
}

\icmltitlerunning{Scaling Sparse Language Models with Repeated Training}

\begin{document}

\twocolumn[
  \icmltitle{When Data Is Scarce: Scaling Sparse Language Models with Repeated Training}



  \icmlsetsymbol{equal}{*}

  \begin{icmlauthorlist}
    \icmlauthor{Boqian Wu}{equal,lux,twente}
    \icmlauthor{Qiao Xiao}{equal,tue}
    \icmlauthor{Patrik Okanovic}{eth}
    \icmlauthor{Tomasz Sternal}{eth}
    \icmlauthor{Maurice van Keulen}{twente} 
    \icmlauthor{Mykola Pechenizkiy}{tue}
    \icmlauthor{Elena Mocanu}{twente}
    \icmlauthor{Torsten Hoefler}{eth}
    \icmlauthor{Decebal Constantin Mocanu}{lux}

  \end{icmlauthorlist}

  \icmlaffiliation{tue}{Eindhoven University of Technology}
  \icmlaffiliation{lux}{University of Luxembourg}
  \icmlaffiliation{twente}{University of Twente}
  \icmlaffiliation{eth}{ETH Zürich}
  
  \icmlcorrespondingauthor{Boqian Wu}{boqian.wu@uni.lu}
  \icmlcorrespondingauthor{Qiao Xiao}{q.xiao@tue.nl}

  \icmlkeywords{Machine Learning, ICML}

  \vskip 0.3in
]



\printAffiliationsAndNotice{}  

\begin{abstract}
Scaling laws for dense LLMs under infinite data are well explored, but how sparsity interacts with limited data is not. In this work, we study sparse training in data-constrained regimes where limited unique tokens require multi-epoch training. Our experiments span models up to 1.92B parameters in the fitting set, sparsity up to 93.75\%, unique data budgets up to 2.6B tokens, and total training tokens up to 41.6B over 16 epochs; we further validate extrapolation on held-out dense-equivalent models up to 7.68B parameters. We find that: \textbf{\ding{172} Sparse scaling in data-limited settings:} We introduce a scaling law that models loss as a function of active parameters, unique tokens, data repetition, and sparsity, accurately predicting performance across compute and data budgets. \textbf{\ding{173} Delayed data saturation:} sparse training postpones diminishing returns from repeated data, making multi-epoch training more effective. \textbf{\ding{174} Resource trade-offs:} With fixed data, loss-optimal sparsity is moderate ($\sim 50\%$), while compute-optimal sparsity is higher and grows with data scale. Overall, sparsity is not just a tool for efficiency, but a mechanism for improving scaling trade-offs under data scarcity. Our code is available at: \url{https://github.com/boqian333/sparse-dc-scaling}.
\end{abstract}

\section{Introduction}

Scaling has been central to the recent success of large language models (LLMs) \cite{Scaling@Jared, brown2020languagemodelsfewshotlearners, Chinchilla, touvron2023llamaopenefficientfoundation, touvron2023llama, grattafiori2024llama3herdmodels, openai2024openaio1card, openai2024gpt4technicalreport, deepseekai2024deepseekllmscalingopensource,  geminiteam2025geminifamilyhighlycapable, qwen2025qwen25technicalreport}. While claims about the “death” of AI scaling laws remain actively debated \footnote{See, e.g., public discussions on the potential limits and future evolution of scaling laws in widely read blogs \cite{EpochScaling2030Blog, MarcusScaleDead} and in coverage by major news outlets \cite{SchmidtScalingDebate, YannLeCunScalingCritique}.
}, growing evidence suggests that scaling behavior continues to evolve rather than simply saturate \cite{cheng2026conditional, sadhukhan2026stem}.

A key driver of this evolution is the growing mismatch between data availability and computation. While the supply of high-quality human data increases at a modest rate of approximately 1.03$\times$ per year, the data used for pre-training has grown much faster, at roughly 4$\times$ per year \cite{sevilla2024trainingcompute, villalobos2024position}. As a result, the exhaustion of high-quality data has ushered in a data-constrained pre-training regime \cite{villalobos2022will, muennighoff2023scaling}, fundamentally reshaping the landscape of LLM scaling. More broadly, this shift highlights a central research challenge: how to enhance model capabilities using a finite amount of high-quality data, rather than relying solely on increasing data volume.


Under data-constrained regimes, commonly adopted approaches for improving model performance, such as allocating additional compute to increase model size or repeating training data, often exhibit diminishing returns and incur substantial computational costs \cite{muennighoff2023scaling, kim2025pretraining}. 
Although stronger regularization, model scaling, and ensembling can improve data efficiency in data-constrained pre-training \cite{kim2025pretraining}, they largely operate within the conventional dense pre-training paradigm and do not fundamentally alter the underlying architecture or introduce a qualitatively different scaling mechanism.

This distinction motivates a broader question: under finite data, is scaling constrained only by the amount of model capacity, or also by how that capacity is distributed and optimized? In dense training, all parameters are updated at every step on the same limited data, which can accelerate saturation under repeated exposure. This raises the possibility that scaling efficiency may be improved by expanding the total parameter space while keeping per-token compute, determined by the number of non-zero parameters, fixed or even reduced. Such a perspective challenges the assumption of fully dense pre-training and motivates the use of sparsity \cite{mocanu2018scalable, frankle2018lottery, evci2020rigging, Hoefler2021Sparsity, wu2025dynamic}.


Sparsity, however, can be introduced in different ways. Architectural sparsity, such as in Mixture-of-Experts (MoE) models \cite{ShazeerMMDLHD17}, increases model capacity through conditional activation, but largely preserves dense optimization within each expert. In contrast, sparse training methods, such as Dynamic Sparse Training (DST) \cite{mocanu2018scalable, Guillaume, mostafa2019parameter, evci2020rigging, yuan2021mest, liu2021sparse, xiaodynamic, xiao2025leave, wu2025dynamic}, impose sparsity directly during optimization by updating only a subset of parameters and dynamically evolving the network structure. Architectural sparsity alone does not change how often the same parameters are optimized, whereas parameter-level sparse optimization redistributes updates across a larger pool of weights. Nevertheless, how sparse optimization reshapes the scaling behavior of LLM pre-training under data-constrained regimes remains insufficiently understood.


In this study, we seek to develop a systematic understanding of how DST scales in data-constrained LLM pre-training by formulating and analyzing its scaling behavior. To make this investigation tractable, we structure our investigation around three key questions:

\textbf{Sparse scaling law under data-constrained regimes.} How does pre-training performance scale with model size and data repetition across different sparsity levels under data constraints? This question is addressed in Section \ref{sec:SparseData-constrainScalingLaw}.

\textbf{Allocation.} How does sparsity influence the optimal trade-off between model size and data repetition under fixed data constraints and compute budgets? This trade-off is analyzed in Section~\ref{sec:allocation}.

\textbf{Optimal sparsity.} Given a fixed data budget, how can we determine the optimal sparsity level for efficient pre-training? The optimal sparsity strategy is studied in Section~\ref{sec:Optimalsparsity}.

To answer these questions, we conduct experiments across sparsity, model size, data budget, and training duration, covering sparsity up to 93.75\%, unique data budgets up to 2.6B tokens, and total training tokens up to 41.6B over 16 epochs. We fit scaling laws on models up to 1.92B dense-equivalent parameters and validate extrapolation on held-out models up to 7.68B parameters, leading to the following key findings:

\ding{172} \textbf{Sparse training improves data utilization efficiency.} We find that sparsity increases the data saturation scale $R_d^*(S)$, which governs the decay rate and asymptotic headroom of effective data under scaling. $R_d^*(S)$ rises from $4.4$ at $S=0$ to about $6.90$ at $S=0.5$, implying slower saturation and a larger effective-data ceiling for sparse models.
   
\ding{173} \textbf{Sparse training improves parameter efficiency.} Compared to dense training, sparse training benefits relatively more from allocating additional compute to longer training rather than to increasing parameter count. This leads to up to a $3\times$ smaller parameter-to-token ratio at a 1.3B-token data budget and compute on the order of $10^{20}$ FLOPs, improving inference efficiency.
   
\ding{174} \textbf{Compute-optimal sparsity enables cheaper large-scale training.} At compute-optimal sparsity, sparse models match dense-model accuracy while using approximately \(8\times\) to \(10\times\) fewer training FLOPs.

To our knowledge, we provide the first systematic study of DST under data-constrained scaling, showing that sparsity can improve scaling efficiency when data is limited.

\begin{figure*}[!htb]
    \centering
    \includegraphics[width=1.0\textwidth]{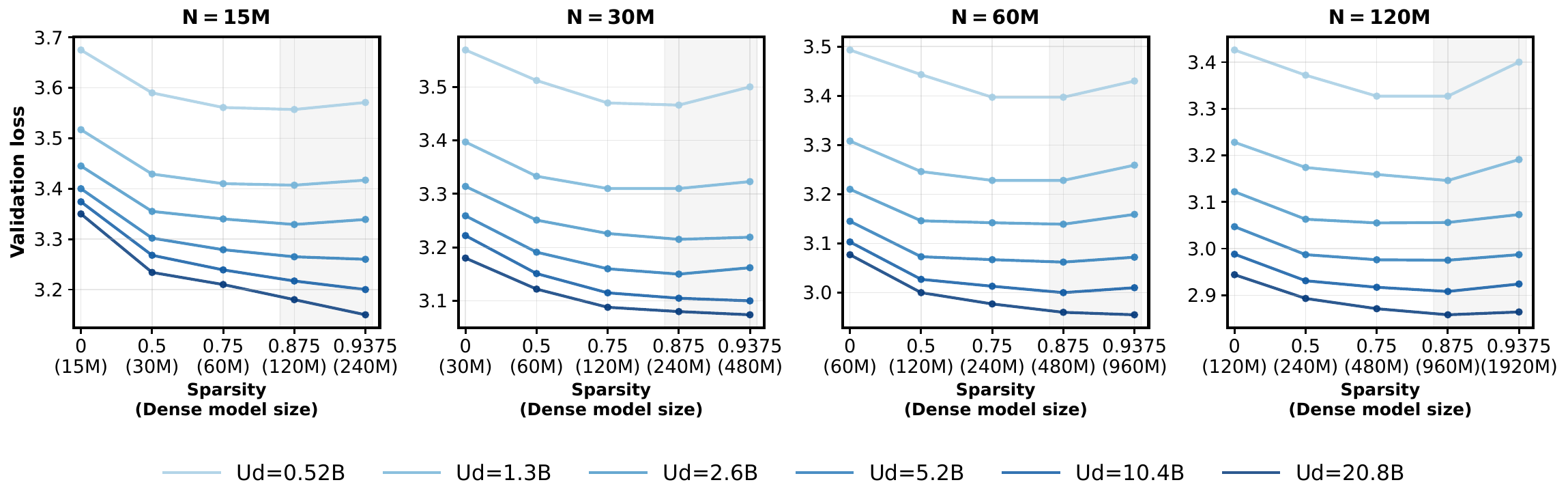}
    \caption{Sparsity–capacity trade-off in the data-constrained regime. Validation loss as a function of sparsity ($S$) for models with different numbers of non-zero parameters ($N=15$M, 30M, 60M, 120M) and different unique-token budgets $U_d$.}
    \label{fig:obs1}
\end{figure*}

\section{Background and Motivation}

Scaling laws \cite{Scaling@Jared, Chinchilla, zhai2022scaling, muennighoff2023scaling, SparselyScalingLaws} have played a central role in guiding the training of large language models by prescribing how compute should be allocated between model size and training data. 

\subsection{Chinchilla scaling law}
A widely adopted model of dense scaling behavior is the Chinchilla scaling law \cite{Chinchilla}, which models validation loss as
\begin{equation}
\label{eq:eq1}
L(N, D) = \frac{A}{N^{\alpha}} + \frac{B}{D^{\beta}} + E,
\end{equation}
where $N$ denotes the number of model parameters and $D$ the number of training tokens. Here, $A$ and $B$ are positive fitted constants, $\alpha$ and $\beta$ are fitted scaling exponents, and $E$ denotes a fitted irreducible loss term. Under a standard Transformer cost model, the total training compute, measured in FLOPs, can be approximated as $C \approx 6ND$ \cite{Scaling@Jared, Chinchilla}. Equation~\ref{eq:eq1} predicts scaling trends in the infinite-data regime and yields a compute--optimal trade-off between parameters and tokens.


\subsection{Two orthogonal perspectives on scaling}

Two largely orthogonal lines of work characterize scaling behavior along different dimensions.

\paragraph{Sparse scaling laws.}
Sparse scaling laws extend dense formulations by explicitly accounting for the number of active parameters during training \cite{SparselyScalingLaws, UnifiedSparseandDenseScalingLaws}. These works demonstrate that sparsity can improve parameter efficiency and enable larger effective model capacities under fixed compute budgets. However, they are typically studied in data-sufficient regimes and do not model data repetition or overfitting.

\paragraph{Data-constrained scaling laws.}
Data-constrained scaling laws \cite{muennighoff2023scaling} explicitly model diminishing returns from repeated training data by introducing the notion of effective dataset size $D'$. 
\begin{equation}
D' = U_d + U_d R_d^* \left(1 - e^{-R_d / R_d^*}\right),
\end{equation}
where $U_d$ denotes the number of unique training tokens, while $R_d$ measures data repetition, with $R_d^*$ controlling the rate of diminishing returns from repeated tokens. This formulation models diminishing returns from repeated tokens as saturation in effective data, but assumes fully dense optimization and does not capture how alternative training paradigms such as sparse training may alter the value of repetition.

\subsection{Why sparsity and data reuse interact non-trivially}




The two lines of scaling laws described above have largely been developed in isolation; more details are provided in Appendix \ref{sec:position}. However, modern pre-training regimes often involve both sparsity \cite{agarwalla2024enabling, xiao2025leave, ansell2024scaling} and data reuse \cite{muennighoff2023scaling, kim2025pretraining}, making it unclear whether insights from either line of work transfer directly to the other. 

More importantly, sparse training alters not only model capacity but also how parameters are optimized. Dense training repeatedly updates the same parameters on limited data, accelerating saturation. Sparse optimization instead distributes updates across different subsets of parameters, potentially changing how models benefit from repeated data. Because scaling laws are highly non-linear, these factors interact in non-trivial ways, and dense data-constrained predictions may no longer apply. 

To address this gap, we propose a framework that jointly studies sparse training and data reuse. To our knowledge, our work is the first to empirically characterize this regime and clarify how sparse optimization reshapes scaling behavior under finite data.

\section{Sparse Data-Constrained Scaling Law}
\label{sec:SparseData-constrainScalingLaw}

Our goal is to derive a scaling law for dynamic sparse training (DST) under data-constrained regimes by characterizing how sparsity $S$ enters the data-constrained formulation. While prior work models sparsity and data repetition separately, it is unclear how sparsity simultaneously affects parameter efficiency, the value of repeated data, and overfitting behavior. To guide the formulation, we first conduct exploratory experiments and distill empirical observations. 

\begin{figure*}[!htb]
    \centering
    \includegraphics[width=0.92\textwidth]{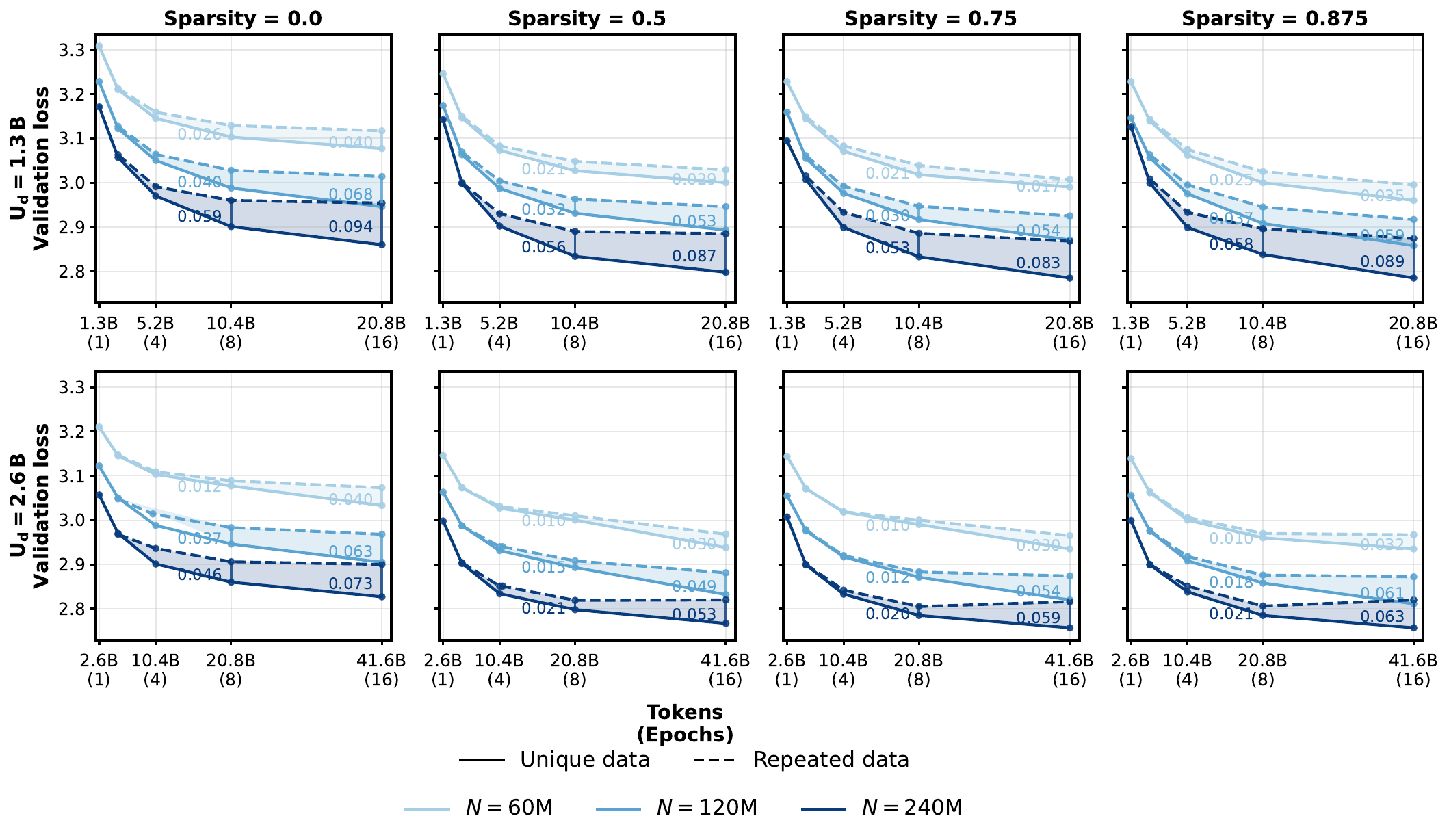}
    \caption{Scale-dependent effects of data repetition across sparsity levels. Validation loss as a function of training tokens (epochs) for models with different non-zero parameter budgets ($N = 60$M, 120M, 240M), training-token budgets, and sparsity levels. Solid lines correspond to training on unique tokens, while dashed lines correspond to training with repeated tokens.}
    \label{fig:obs2}
\end{figure*}

\subsection{Experimental setup}
We study the LLaMA-2~\cite{touvron2023llama} family of models trained on the C4 dataset, and conduct extensive sweeps over sparsity, model size, data budget, and training epochs to derive scaling laws. We consider three distinct token budgets: 520M, 1.3B, and 2.6B tokens. For each budget, each model is trained for 1, 2, 4, 8, and 16 epochs, yielding total training token counts ranging from 520M to 41.6B in the repeated-token setting. 

We vary the active model size across seven parameter targets, from 15M to 960M in $2\times$ increments: 15M, 30M, 60M, 120M, 240M, 480M, and 960M. Sparsity levels are selected from $\{0.0\%, 50\%, 75\%, 87.5\%, 93.75\%\}$, corresponding to compression factors from $1\times$ to $16\times$. To keep the sweep computationally tractable, we use a partial grid and omit the highest-sparsity settings for the largest active models. The resulting dense-equivalent model sizes, defined as $N/(1-S)$, where $N$ is the non-zero parameter count and $S$ denotes sparsity, range from 15M to 1.92B parameters. Our experimental suite consists of 500 training runs; we further validate extrapolation on held-out dense-equivalent models up to 7.68B parameters.

All models are trained with a batch size of 512 using the Adam optimizer in bfloat16 precision. Learning rates are selected according to the $\mu$P scaling rule~\cite{miup}, with a linear warm-up applied over the first $1\%$ of training tokens. Sparsity is held constant throughout training, and masks are updated using magnitude pruning followed by random regrowth~\cite{mocanu2018scalable}. Details of the DST algorithm are provided in Appendix~\ref{App: ExperimentSetup}. This setup prioritizes consistency across scales rather than optimizing performance for any single configuration.

\subsection{Sparsity under Data Constraints without Repetition}

 Before integrating data repetition, we isolate the effect of sparsity on the Chinchilla scaling law.  While prior studies \cite{SparselyScalingLaws, UnifiedSparseandDenseScalingLaws} have explored sparse scaling laws, we re-examine this relationship beyond the standard compute-optimal infinite-data paradigm to account for the data-constrained regime. Figure \ref{fig:obs1} visualizes validation loss as a function of sparsity $S$ across different non-zero parameter and compute budgets. Across all settings, validation loss decreases mildly with increasing sparsity and then either plateaus or slightly increases at higher sparsity levels. In Figure \ref{fig:obs1}, $N$ denotes the number of non-zero active parameters. The corresponding dense-equivalent model size is given by $N/(1-S)$, which increases rapidly with sparsity. As a result, even for small values of $N$, high sparsity levels correspond to dense models with hundreds of millions to billions of parameters. Therefore, the experiments in Figure \ref{fig:obs1} operate beyond the small-model regime and should not be interpreted as small-scale experiments. Our formulation is guided by the following observation as shown in Figure \ref{fig:obs1}:

\begin{observationbox1} 
For a fixed budget of non-zero parameters ($N$), increasing sparsity ($S$) initially improves validation loss. However, these gains eventually saturate or diminish as $S$ reaches extreme levels.
\end{observationbox1}

\begin{formulationbox}
To capture this trade-off, we modulate the parameter-scaling term by a sparsity-dependent factor:
\begin{equation}
    \frac{A}{N^{\alpha}}
\;\longrightarrow\;
\frac{A\big((1-S)^{\epsilon} + P S^{\mu}\big)}{N^{\alpha}}.
\end{equation}
This induces a U-shaped dependence on $S$ and recovers dense scaling when $S=0$.
\end{formulationbox}

Importantly, these results do not contradict prior sparse scaling laws \cite{SparselyScalingLaws}, but rather extend them to the data-constrained regime. Existing formulations \cite{SparselyScalingLaws} are derived under a compute-optimal assumption with effectively infinite data, under which performance improves monotonically with sparsity for a fixed number of non-zero parameters $N$. In the infinite-data regime, higher sparsity corresponds to a larger dense-equivalent model whose additional capacity can be fully exploited. 

In contrast, under data-constrained training with unique data, the available data does not scale with the dense-equivalent model size.
While increasing sparsity continues to increase effective capacity and can still reduce validation loss, the marginal benefit of additional sparsity diminishes as $S$ becomes extreme.
At high sparsity levels, the extra capacity introduced by sparsity cannot be fully utilized given the fixed data budget, leading to saturation or mild degradation in performance.

This behavior manifests as the observed U-shaped dependence of validation loss on sparsity and reflects a capacity--data mismatch at extreme sparsity, rather than a breakdown of scaling with $N$. Consistent with this interpretation, when $N$ is sufficiently small, the available data remains adequate relative to the effective model capacity even at high sparsity levels, and validation loss continues to decrease monotonically with sparsity, as shown in Figure~\ref{fig:obs1}. 

The analysis above isolates the effect of sparsity when each token is seen only once. In this regime, sparsity primarily acts as a mechanism for reallocating effective model capacity under a fixed non-zero parameter budget. However, capacity reallocation alone does not fully characterize practical data-constrained training. In many realistic scenarios, additional unique data is unavailable, and models are trained for multiple epochs over the same dataset. To address this question, we compare how dense and sparse training differ in their ability to utilize repeated data.

Figure~\ref{fig:obs2} illustrates how models trained on repeated data compare to those trained on an equivalent amount of unique data across different scales. Across sparsity levels, we observe a consistent performance gap: models trained on repeated data plateau earlier, exhibiting diminishing returns with additional compute. For a fixed training data budget and repetition ratio, this gap increases with model size \(N\), indicating that larger models experience diminishing returns from repetition more rapidly. Conversely, for a fixed model size, increasing the unique-token budget \(U_d\) reduces the gap, as
additional training allows repeated data to remain useful for longer.

These trends closely align with previous work on data-constrained scaling in
dense networks~\cite{muennighoff2023scaling, Jinjie@Diffusion}; a detailed
discussion is provided in Appendix~\ref{sec:Formulation}. Motivated by this
connection, we adopt the dense data-constrained scaling formulation of
Muennighoff et al.~\cite{muennighoff2023scaling} as our starting point. In this formulation, repeated training does not contribute linearly to model
performance. Instead, the loss is expressed in terms of an effective data size
\(D'\) and an effective number of parameters \(N'\):
\begin{equation}
\label{eq:4}
L(N',D') = \frac{A}{(N')^{\alpha}} + \frac{B}{(D')^{\beta}} + E,
\end{equation}
where the effective data size \(D'\) captures the diminishing returns from repeated
tokens. It is defined as
\begin{equation}
\label{eq:5}
D' = U_d + U_d R_d^*\!\left(1 - e^{-R_d / R_d^*}\right),
\end{equation}
where \(U_d\) denotes the number of unique training tokens, \(R_d\) denotes the
data repetition ratio, and \(R_d^*\) controls how quickly repeated data
saturates. A larger \(R_d^*\) indicates slower saturation, meaning that repeated
data remains useful for longer. Similarly, the effective number of parameters \(N'\) captures diminishing
returns from increasing model capacity and is defined as
\begin{equation}
\label{eq:6}
N' = U_n + U_n R_n^*\!\left(1 - e^{-R_n / R_n^*}\right),
\end{equation}
where \(U_n\) denotes the unique parameter budget, \(R_n\) denotes the parameter
scaling ratio, and \(R_n^*\) controls the saturation rate of additional model
capacity.

Equations~\ref{eq:4} – \ref{eq:6} apply only to dense models. We next examine how introducing sparsity modifies the notion of effective data, and how this, in turn, influences the impact of data repetition. Figure~\ref{fig:obs2} shows that sparsity reduces the performance gap between training on repeated and unique data, with the magnitude of this effect depending on the sparsity level. In particular, the gap narrows most noticeably at intermediate sparsity levels (around $0.5$), where repeated data saturates more slowly than in dense models. At higher sparsity levels, the gap remains smaller than in dense networks, but the additional reduction is more modest.
\begin{observationbox2} 
Sparsity delays repeated-data saturation, with the largest gains at moderate sparsity and diminishing returns thereafter.
\end{observationbox2}
This observation suggests that sparsity shifts the data saturation point. We therefore explicitly model the data saturation threshold $R_d^*$ as sparsity-dependent.
\begin{formulationbox}
To capture this interaction, we allow the data saturation threshold $R_d^*$ to depend on sparsity,
\begin{equation}
    R_d^*(S) = R_d^* \left( 1 + \lambda_1 S + \sigma_1 S^2 \right),
\end{equation}
which recovers the dense baseline at $S=0$. 
\end{formulationbox}

Within the explored sparsity range, this curvature represents a secondary correction to the leading linear effect, reflecting diminishing marginal benefits of increased sparsity rather than a strongly non-monotonic dependence.

\subsection{Sparse Data-Constrained Scaling Law}

As in previous work \cite{muennighoff2023scaling}, we use a symmetric formulation for $R_n^*$, which is given by
\begin{equation}
    R_n^*(S) =
R_n^* 
\left( 1 + \lambda_2 S + \sigma_2 S^2 \right).
\end{equation}


\begin{formulationbox}
The final sparse data-constrained scaling law:
\begin{equation}
L(N', D', S)
=
\frac{A\big((1-S)^{\epsilon} + P S^{\mu}\big)}{N'^{\alpha}}
+
\frac{B}{D'^{\beta}}
+
E.
\end{equation}
The effective model size is defined as
\begin{align}
N' &= U_n + U_n R_n^*(S) \left(1 - e^{-R_n / R_n^*(S)}\right), \\
D' &= U_d + U_d R_d^*(S) \left(1 - e^{-R_d / R_d^*(S)}\right),
\end{align}
with sparsity-dependent $R_d^*(S)$ and $R_n^*(S)$:
\begin{equation}
    R_d^*(S) = R_d^* \left( 1 + \lambda_1 S + \sigma_1 S^2 \right),
\end{equation}
\begin{equation}
    R_n^*(S) =
R_n^* 
\left( 1 + \lambda_2 S + \sigma_2 S^2 \right).
\end{equation}
\end{formulationbox}


All coefficients are learned via L-BFGS following~\cite{Chinchilla}, as
described in Appendix~\ref{sec:Fittingprocedure}. The final fitted result is
shown in Figure~\ref{fig:fitted results}. Although the model includes multiple fitted coefficients, it generalizes well to unseen configurations
(validation \(R^2=0.914\) vs.\ fitted \(R^2=0.982\)), suggesting that the
formulation captures real structure rather than overfitting.

The fitted coefficients provide insight into how sparsity affects both
parameter efficiency and data reuse. In the parameter-scaling factor
\((1-S)^{\epsilon}+P S^{\mu}\), the fitted value \(P<0\) shows that sparsity
reduces the effective parameter-scaling penalty, improving parameter efficiency
under a fixed non-zero parameter budget. Since \(\mu=0.847<1\), this benefit
grows sublinearly with sparsity and therefore exhibits diminishing marginal
returns. Meanwhile, \(\epsilon\approx 0\) but slightly negative, so
\((1-S)^{\epsilon}\) introduces a weak penalty near extreme sparsity rather
than allowing sparsity to provide unbounded gains.

The fitted saturation thresholds reveal a complementary effect. Since
\(\sigma_1<0\) and \(\sigma_2<0\), both \(R_d^*(S)\) and \(R_n^*(S)\) are
concave functions of sparsity. However, \(R_d^*(S)\) peaks later, around
\(S\approx0.66\), increasing from \(4.4\) at \(S=0\) to about \(6.9\) at
\(S=0.5\), which indicates that moderate sparsity delays repeated-data
saturation. In contrast, \(R_n^*(S)\) peaks earlier, around \(S\approx0.34\),
and then decreases sharply, showing that excessive sparsity limits effective
model scaling. Together, these effects explain why moderate sparsity yields the largest gain: it improves parameter efficiency and data reuse while avoiding the model-capacity limitations that appear at high sparsity.

\begin{figure}[!htb]
    \centering
    \includegraphics[width=0.4\textwidth]{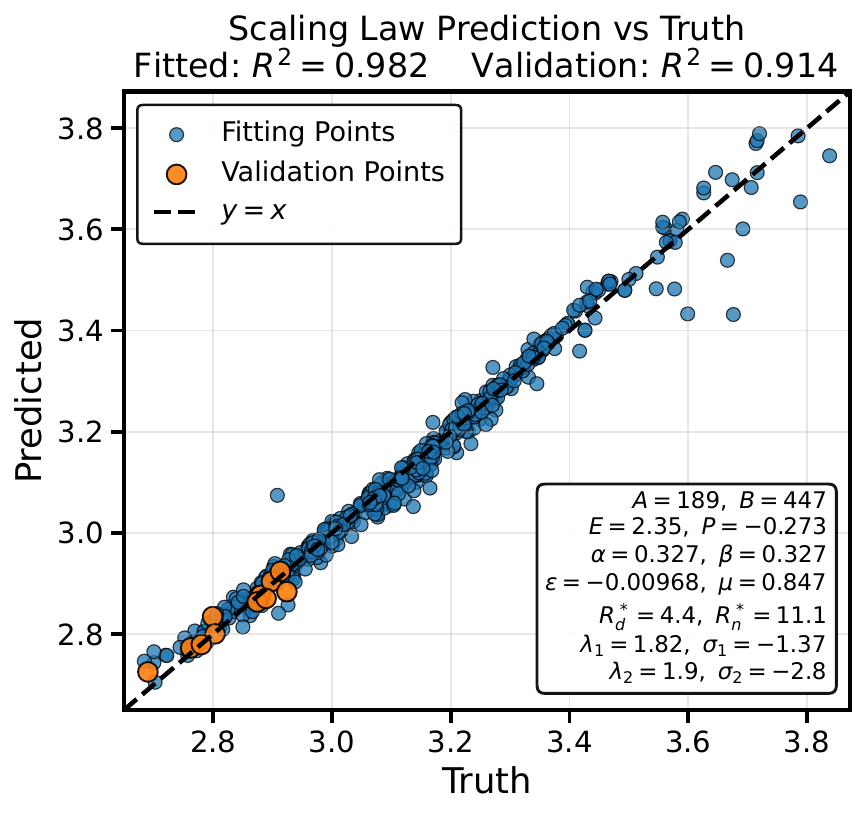}
    \vskip -0.1in
    \caption{Prediction versus ground-truth loss. Each point corresponds to a trained model configuration.}
    \label{fig:fitted results}
\end{figure}

\begin{figure*}[!htb]
    \centering
    \includegraphics[width=1.0\textwidth]{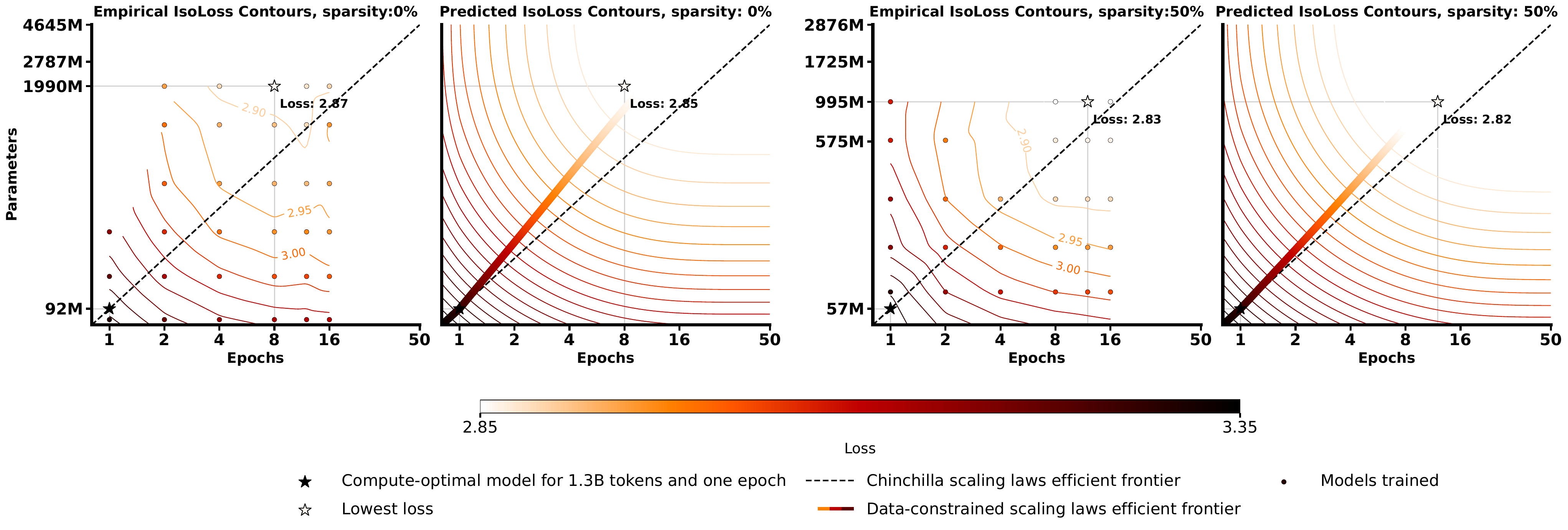}
    \caption{Iso-Loss curves under a fixed budget of 1.3B unique tokens.
Top row: dense training (\(S=0\)). Bottom row: sparse training
(\(S=50\%\)). Left column: empirical contours. Right column:
predicted contours from the fitted scaling law.}
    \label{fig:IsoLoss}
\end{figure*}

\section{Resource Allocation}
\label{sec:allocation}

\subsection{Iso-Loss Curve}
In our first experimental setting, we study scaling behavior under a fixed data budget for both sparse and dense training. With the amount of training data held constant, we vary compute allocation either by increasing the number of model parameters or by training for more epochs on the same dataset. Figures~\ref{fig:IsoLoss} (a) and (b) show the main results for scaling with 1.3B unique tokens for the dense case ($S=0$) and the sparse case ($S=50\%$). In the dense training setting (Figure~\ref{fig:IsoLoss} (a)), the data-constrained efficient frontier, corresponding to multi-epoch training, differs from the Chinchilla-style compute-optimal frontier derived under a single-epoch (no data reuse) assumption. Specifically, the data-constrained frontier shifts toward larger model sizes and fewer training epochs, indicating that once data is reused, allocating additional compute to model parameters becomes more beneficial than allocating it to further passes over the same dataset. This geometric shift of the frontier reflects the faster decay in the marginal value of repeated data relative to that of additional parameters (i.e., $R_d^* < R_n^*$).

In contrast to dense training, at 50\% sparsity (Figure~\ref{fig:IsoLoss}   (b)) the efficient frontier shifts to favor allocating additional compute to more training epochs rather than to increasing model parameters. For example, this behavior can be observed at compute levels on the order of $10^{20}$ FLOPs. At this scale, the near-optimal configuration in the dense case (Figure~\ref{fig:IsoLoss} (a)) is around 8 epochs with roughly 2B non-zero parameters, whereas in the sparse case (Figure~\ref{fig:IsoLoss} (b)) the efficient frontier instead favors training for about 12 epochs with roughly 1B non-zero parameters. As a result, the frontier moves closer to the optimal allocation observed in the single-epoch training regime. This shift provides an additional advantage: smaller parameter-to-token ratios. 
For dense training, $N/D = 2\text{B}/(1.3\text{B} \times 8) \approx 0.19$, 
whereas for sparse training,  $N/D = 1\text{B}/(1.3\text{B} \times 12) \approx 0.064$. This corresponds to an approximately \(3\times\) reduction in the parameter-to-token ratio and, more importantly, uses fewer non-zero parameters, improving inference efficiency. This behavior reflects the sparsity-dependent saturation thresholds:
moderate sparsity increases \(R_d^*(S)\), allowing models to tolerate more
data reuse, while excessive sparsity reduces \(R_n^*(S)\) and limits effective
model scaling. As a result, moderate sparsity shifts the optimal allocation
toward longer training rather than larger non-zero parameter counts, consistent
with prior sparse scaling laws~\cite{SparselyScalingLaws}. In the
data-constrained setting, this appears as delayed diminishing returns from
data reuse.

\subsection{Iso-FLOPs Curve}

Before analyzing Iso-FLOPs behavior, we clarify why we consider two notions of training compute.
For sparse models, training cost can be quantified either in terms of \emph{sparse FLOPs}, which count only operations associated with non-zero parameters, $C = 6ND$, or \emph{dense FLOPs}, which correspond to the cost of training a dense-equivalent model of size $N/(1-S)$. The former adopts an idealized algorithmic view, where per-token computation scales with the number of activated parameters rather than total dense model size, whereas the latter reflects the practical reality that current sparse training implementations often fall short of achieving ideal hardware-level speedups.
By evaluating both definitions, we disentangle the effects of sparsity as a modeling choice from system-level considerations, and assess whether our conclusions depend on how compute is measured.

As shown in Figure \ref{fig:IsoFLOP}, across both sparse and dense FLOPs definitions, we observe a consistent compute-optimal regime at intermediate sparsity levels. While high sparsity yields faster gains at low compute under sparse FLOPs, its performance saturates more rapidly as compute increases due to data-reuse limitations. Conversely, dense models benefit from faster early optimization under dense FLOPs, but suffer from earlier saturation under repeated data. In both cases, moderate sparsity achieves the best balance between effective capacity and data reuse, leading to the lowest achievable loss at higher compute budgets.

\begin{takeawaybox}
    Dense models favor more parameters, while sparse models benefit more from longer training with more epochs, and achieve better overall performance. 
\end{takeawaybox}
    
\begin{figure*}[!htb]
    \centering
    \includegraphics[width=1.0\textwidth]{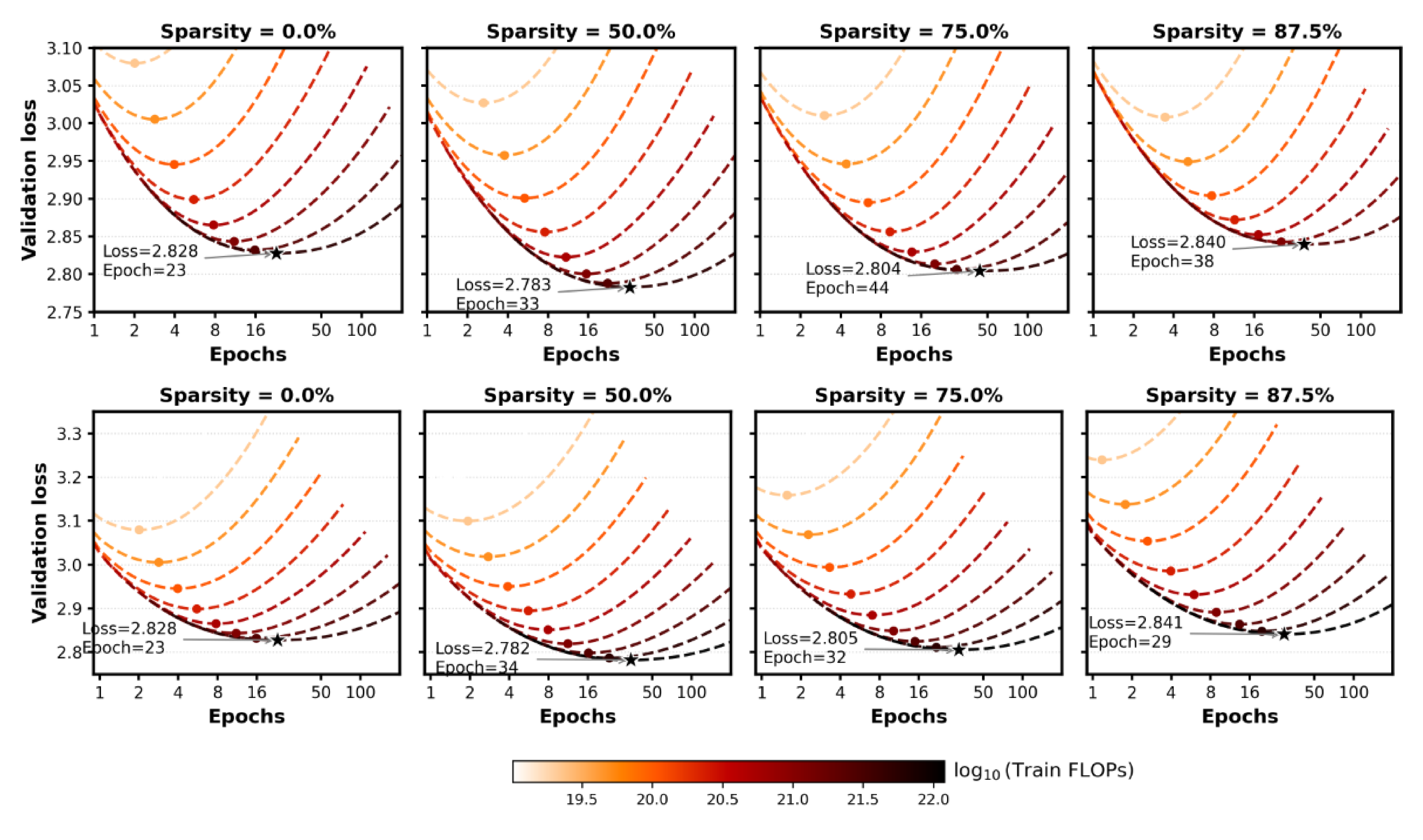}
    \caption{Iso-FLOPs curves across sparsity levels with a fixed budget of 1.3B unique training tokens. Top row: FLOPs are computed under a sparse cost model, $C = 6ND$, where $N$ denotes the number of non-zero parameters. Bottom row: FLOPs are computed under a dense cost model, corresponding to training a dense base model of size $N/(1-S)$ for the same number of training steps. Each curve traces validation loss as a function of training epochs at fixed compute, with colors indicating total training FLOPs. }
    \label{fig:IsoFLOP}
\end{figure*}

\section{Optimal sparsity}
\label{sec:Optimalsparsity}

Analyzing optimal sparsity elevates sparsity from a discrete design choice, as commonly treated in Section \ref{sec:allocation}, to a quantity that can be systematically selected under fixed compute and data budgets. We characterize sparsity optimality under compute constraints through two complementary notions of optimal sparsity, corresponding to different optimization objectives.

\textbf{Loss-optimal sparsity.} The loss-optimal sparsity (highlighted by the yellow star in Figure \ref{fig:optimal_sparsity}) is defined as the sparsity level that minimizes validation loss under a fixed unique data budget. Formally, it is obtained by jointly optimizing model size, training duration (data repetition), and sparsity, without imposing any explicit constraint on training compute:
\begin{equation}
S_{\text{loss}}^{*}({U_d})
=
\arg\min_{S \in [0,1)}
\;(\min_{\substack{N>0\\ D>0}}
\mathcal{L}\!\left(
N,\;
U_d,\;
D,\;
S
\right)).
\end{equation}

\textbf{Compute-optimal sparsity at dense-equivalent performance.} The second notion is the compute-optimal sparsity, defined as the sparsity level that matches the globally optimal dense-model loss while minimizing training compute. Concretely, we first identify the minimum loss achievable by dense training, and then select the largest sparsity at which this loss is reached, while requiring less computation. This operating point represents (highlighted by the red star in Figure \ref{fig:optimal_sparsity}) the most compute-efficient configuration that preserves dense-level performance. 

Let 
\begin{equation}
\begin{aligned}
\mathcal{L}_{\text{dense}}^{*}({U_d})
&=
\min_{\substack{N>0\\D>0}}
\mathcal{L}\!\left(N,\min({U_d},D),D,0\right)
\end{aligned}
\end{equation}
denote the globally optimal loss achievable by dense training. The compute-optimal sparsity is then given by 
\begin{equation}
\begin{aligned}
S^*_{\mathrm{comp}}(U_d)
&=
\arg\min_{S \in [0,1)} C \text{ s.t. } \mathcal{L}^*(C,S;U_d)
\le
\mathcal{L}_{\text{dense}}^{*}({U_d}).
\end{aligned}
\tag{16}
\end{equation}

\begin{figure*}[!htb]
    \centering
    \includegraphics[width=\textwidth]{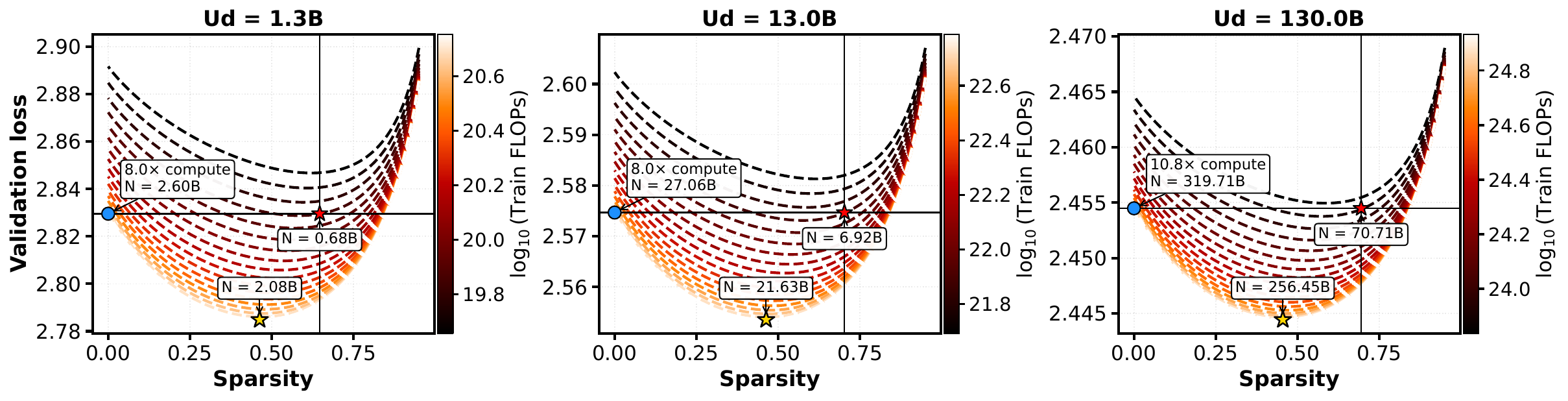}
    \caption{Optimal sparsity under the fixed unique-data budget with 1.3B, 13B and 130B unique tokens. With compute-optimal sparsity, the sparse model achieves dense-equivalent performance with fewer non-zero parameters than the dense optimum (e.g., 0.68B vs.\ 2.6B at $U_d = 1.3$B, and 6.92B vs.\ 27.06B at $U_d = 13$B).
}
    \label{fig:optimal_sparsity}
\end{figure*}

We analyze how two different notions of optimal sparsity vary with the unique token budget ($U_d$).
As $U_d$ increases from $1.3$B to $13$B and $130$B tokens, the compute-optimal sparsity exhibits a clear upward trend,
shifting from approximately $S \approx 0.6$ at $U_d = 1.3$B,
to $S \approx 0.7 $ at $U_d = 13$B and $U_d = 130$B.
In contrast, the loss-optimal sparsity remains relatively stable across token budgets, consistently occurring in a moderate range around $S \approx 0.45\!-\!0.5$.
Overall, while the sparsity level that minimizes loss varies little with $U_d$,
larger token budgets increasingly favor higher sparsity when optimizing for compute efficiency
at dense-equivalent performance.

Another consequence of this trend is improved parameter efficiency at scale.
At the compute-optimal sparsity (red star), models achieve dense-equivalent performance
with substantially fewer non-zero parameters than the dense-optimal baseline
(Figure~\ref{fig:optimal_sparsity}).
For example, at $U_d = 1.3$B, the compute-optimal model uses $N \approx 0.68$B parameters
versus $N \approx 2.60$B for the dense optimum, while reducing training compute by about $8\times$.
This gap widens at larger scales: at $U_d = 13$B and $U_d = 130$B, compute-optimal models use
$N \approx 6.92$B and $N \approx 70.71$B parameters, compared to
$N \approx 27.06B$ and $N \approx 319.71$B for the corresponding dense optima, respectively.
Overall, dynamic sparse training yields models that are both compute- and parameter-efficient
as data scale increases.
\begin{takeawaybox}
Loss-optimal sparsity stays moderate ($S \approx 0.45\text{--}0.5$), while compute-optimal sparsity rises with data scale to about $S \approx 0.6\text{--}0.75$.
\end{takeawaybox}

\section{Conclusion} 
We investigated the scaling behavior of Dynamic Sparse Training (DST) in data-constrained pre-training regimes, where limited unique data necessitates repeated training. Through extensive experiments spanning model size, sparsity, data budgets, and training duration, we showed that sparsity systematically reshapes the interaction between model capacity and data repetition. We proposed a sparsity-aware scaling law that jointly models active parameters, data reuse, and sparsity, accurately capturing performance trends beyond existing dense or sparse formulations. Empirically, we find that sparsity slows the saturation of repeated data, thereby enabling more effective multi-epoch training under fixed data budgets. Our analysis for the sparse data-constrained scaling law further shows that sparsity shifts optimal compute allocation toward longer training through data reuse rather than simply increasing model size, resulting in a smaller parameter-to-token ratio and consequently improved inference efficiency. Overall, these findings provide guidance for training large language models when data is the primary bottleneck.

\section*{Impact Statement}
This work analyzes how dynamic sparse training (DST) affects scaling under data scarcity and proposes a sparsity-aware scaling law for data-constrained pre-training. The results suggest that moderate sparsity can improve training efficiency and make better use of limited datasets, potentially reducing compute and energy costs and benefiting domains with scarce data. We do not foresee any negative societal impacts arising from this research. Instead, we believe our findings will contribute to the broader adoption of efficient and sustainable AI models, ensuring their effectiveness in diverse and demanding environments.

\section*{Acknowledgements}
This work was partly supported by H2020 SmartCHANGE, grant agreement No. 101080965 and TTW Perspectief MegaMind projects. This work was carried out partly using the Dutch national e-infrastructure, with the support of the SURF Cooperative under grant Nos. EINF-14276 and EINF-13990, and partly using the Luxembourg national supercomputer MeluXina. The authors gratefully acknowledge SURF and LuxProvide for their expert support.

\bibliography{example_paper}
\bibliographystyle{icml2026}


\newpage
\appendix
\onecolumn

\section{Notations}

\begin{table}[t]
\centering
\footnotesize
\renewcommand{\arraystretch}{0.8}
\small
\begin{tabular}{ll}
\toprule
Symbol & Description \\
\midrule
$C$ & Training FLOPs \\
$N$ & Total number of non-zero model parameters  \\
$N'$ & Effective number of non-zero parameters \\
$D$ & Total number of training tokens processed \\
$D'$ & Effective number of training tokens \\
$U_d$ & Number of unique training tokens \\
$R_d$ & Number of data repetitions beyond the first epoch \\
$U_n$ & The optimal number of non-zero parameters for $U_d$ \\
$R_n$ & Parameter repetition factor \\
$A, B, E, \alpha, \beta$ & Fitted parameters for dense scaling \\
$ \epsilon, P, \mu$ & Fitted parameters for DST without repetition \\
$R_d^*$ & Fitted saturation threshold for repeated data \\
$R_n^*$ & Fitted saturation threshold for excess model sizes \\
$\lambda_1, \sigma_1$ & Fitted parameters for data saturation \\
$\lambda_2, \sigma_2$ & Fitted parameters for model saturation \\
\bottomrule
\end{tabular}
\caption{Summary of notation used throughout the paper.}
\label{Tab:notation}
\vskip -0.35in
\end{table}

\section{Positioning Within Prior Scaling Regimes}
\label{sec:position}

Prior scaling law studies can be roughly categorized along two dimensions: model sparsity and data regime (data-sufficient vs. data-constrained).

\begin{itemize}
    \item \textbf{Dense training + abundant data:} Well studied in classical neural scaling laws \cite{Scaling@Jared, Chinchilla, zhai2022scaling, muennighoff2023scaling, SparselyScalingLaws}.
    \item \textbf{Dense training + limited data:} Studied in recent data-constrained scaling law work \cite{muennighoff2023scaling, Jinjie@Diffusion}.
    \item \textbf{Dense to sparse training + abundant data:} Explored in prior sparse scaling law literature \cite{SparselyScalingLaws, UnifiedSparseandDenseScalingLaws}.
    \item \textbf{Dense to sparse training + limited data:} Largely unexplored.
    \item \textbf{Sparse to sparse training + abundant data:} Largely unexplored.
    \item \textbf{Sparse to sparse training + limited data:} Largely unexplored.
\end{itemize}

This paper focuses on the final regime: \textbf{sparse to sparse training under data constraints}, where dynamic sparsity interacts with repeated exposure to limited data. We provide the first systematic empirical characterization of scaling behavior in this setting.

\section{Preliminary on Dynamic Sparse Training}
\label{App: ExperimentSetup}

Dynamic Sparse Training (DST) methods aim to train sparse neural networks from scratch by dynamically evolving their sparse connectivity during training. Given a dense neural network $f_{\boldsymbol{\theta}}$ with parameters $\boldsymbol{\theta} \in \mathbb{R}^N$, a sparse version is obtained by applying a binary mask $\boldsymbol{m} \in \{0, 1\}^N$, yielding the sparse model $f_{\boldsymbol{\theta} \odot \boldsymbol{m}} \triangleq f_{\boldsymbol{\theta}^{\prime}}$, where $\odot$ denotes element-wise multiplication. The sparsity level of the resulting network is defined as

$$
S = 1 - \frac{\|\boldsymbol{\theta}^{\prime}\|_0}{\|\boldsymbol{\theta}\|_0},
$$
where $\|\cdot\|_0$ denotes the $\ell_0$ norm, which counts the number of non-zero elements.

\vspace{0.3\baselineskip}

Given a dataset $\mathcal{D} = \{(x_i, y_i)\}_{i=1}^N$, DST optimizes the sparse network $f_{\boldsymbol{\theta}^{\prime}}$ to minimize the empirical loss:

$$
\sum_{i=1}^N \mathcal{L}(f_{\boldsymbol{\theta}^{\prime}}(x_i), y_i).
$$

Let $\theta^l \in \mathbb{R}^{N^l}$ denote the parameters of the $l^\text{th}$ layer, where $N^l$ is the number of trainable weights in that layer \textbf{before masking}. The associated binary mask is $m^l \in \{0, 1\}^{N^l}$. Each layer has its own sparsity level $s^l \in (0,1)$, defined as:

$$
s^l = 1 - \frac{\|\theta^l \odot m^l\|_0}{N^l},
$$

i.e., the fraction of weights in layer $l$ that are set to zero by the mask. This formulation ensures that both per-layer and global sparsity are precisely controlled, enabling consistent computational efficiency and training dynamics.

\subsection{Layer-wise Sparsity Allocation}

Before training begins, a fraction of the model’s parameters is masked to zero to match a predefined sparsity. A simple initialization scheme is the \textbf{uniform strategy}, which randomly generates sparse masks such that each layer has the same sparsity. All layers except the first and last are assigned equal sparsity $s^l = S$. The first and last layer are typically kept dense due to theirs high sensitivity.

\subsection{Prune-and-Regrow}

DST periodically updates the binary mask $\boldsymbol{m}$ through a \textbf{prune-and-regrow} process. This mechanism dynamically removes and introduces connections, allowing the network to adapt its sparse structure during training while maintaining a fixed sparsity level. In DST, the evolution of the sparse connectivity is governed by a predefined update schedule, which determines when and how the binary mask $\boldsymbol{m}$ is modified during training. The schedule is controlled by the following hyperparameters:

\begin{itemize}
    \item $\Delta T$: the number of training iterations between consecutive mask updates,
    \item $T$: the iteration after which mask updates are stopped,
    \item $\beta$: the initial fraction of connections to be updated at each step,
    \item $f_{\text{decay}}$: a decay function that modulates the fraction of updated connections over time.  
\end{itemize}

The decay function $f_{\text{decay}}$ is applied every $\Delta T$ iterations until $T$ to gradually reduce the number of modified connections as training progresses. The function is defined as:

$$
f_{\text{decay}}(t; \beta, T) = \frac{\beta}{2} \left(1 + \cos\left(\frac{t \pi}{T}\right)\right),
$$

where $t$ denotes the current training iteration. This schedule allows for aggressive exploration of sparse topologies early in training, while gradually stabilizing the connectivity pattern as convergence approaches.

There are two representative DST algorithms:

\vspace{0.3\baselineskip}

\textbf{Sparse Evolutionary Training (SET)} \citep{mocanu2018scalable}: At each update step, a fraction $p$ of the active weights with the smallest magnitudes is pruned, and an equal number of new weights are randomly reactivated and initialized to zero.

\vspace{0.3\baselineskip}

\textbf{Rigged Lottery Tickets (RigL)} \citep{evci2020rigging}: Also prunes the smallest-magnitude weights, but regrows new connections based on the gradients of inactive weights—specifically selecting those with the largest absolute gradient values.

\vspace{0.3\baselineskip}

The two algorithms show similar performance, but one incurs nearly double the computational cost in the scaling experiments. \textbf{Therefore, we adopt SET for this study.}

\subsection{Difference with Pruning, Dense-to-Sparse Training}

DST begins with a randomly initialized sparse network and maintains  \textbf{sparse throughout training}, dynamically evolving the connectivity via mechanisms such as prune-and-regrow. Both the weight values and the sparse topology are optimized simultaneously during training. This contrasts sharply with post-training and dense-to-sparse approaches, which inherit their initial topology from dense networks. 

\subsection{Learning Rate Selection}

We determine the learning rate in two steps. First, we identify the
learning-rate scaling rule for dense training across different model sizes.
Then, we study whether this rule can be directly transferred to dynamic sparse
training.

\paragraph{Dense training learning rate.}
We start from existing scaling laws for batch size and learning rate. Prior
work \cite{Tomer2024Resolving} suggests that the optimal learning rate decreases as model size
increases. In \cite{Tomer2024Resolving}, the DeepSeek fit is
\begin{equation}
    \mathrm{LR} = 0.17 \times N^{-0.25},
\end{equation}
where $N$ denotes the model size.

\vspace{0.3\baselineskip}

To verify this trend in our setting, we perform a learning-rate sweep on Llama models with different sizes under a fixed training budget of 2.6B tokens. The tested models include 120M, 240M, 480M, and 960M parameters. The results are shown in Table~\ref{tab:dense_lr_sweep}.

\begin{table}[t]
\centering
\caption{Learning-rate sweep for dense training: LLaMA-120M, 240M, 480M, and 960M are trained with 2.6B tokens, while LLaMA-1.92B is trained with 5.2B tokens.}
\label{tab:dense_lr_sweep}
\begin{tabular}{lll}
\toprule
Model size & Learning rate & Perplexity \\
\midrule
Llama-120M & $2^{-9.5}$   & 23.12 \\
Llama-120M & $2^{-9.75}$  & \textbf{22.70} \\
Llama-120M & $2^{-10}$    & 22.81 \\
\midrule
Llama-240M & $2^{-9.75}$  & 21.64 \\
Llama-240M & $2^{-10}$    & \textbf{21.27} \\
Llama-240M & $2^{-10.25}$ & \textbf{21.27} \\
\midrule
Llama-480M & $2^{-10.25}$ & 20.45 \\
Llama-480M & $2^{-10.5}$  & \textbf{20.13} \\
Llama-480M & $2^{-10.75}$ & 20.32 \\
\midrule
Llama-960M & $2^{-10.5}$  & 19.98 \\
Llama-960M & $2^{-10.75}$ & \textbf{19.71} \\
Llama-960M & $2^{-11}$    & 19.95 \\
\midrule
Llama-1.92B & $2^{-10.75}$  & 17.48 \\
Llama-1.92B & $2^{-11}$ & \textbf{17.40} \\
Llama-1.92B & $2^{-11.25}$    & 17.79 \\
\bottomrule
\end{tabular}
\end{table}

\vspace{0.3\baselineskip}

From these experiments, the optimal dense-training learning rates are
approximately $2^{-9.75}$, $2^{-10}$, $2^{-10.5}$, and $2^{-10.75}$ for
120M, 240M, 480M, and 960M models, respectively. This confirms that the
optimal learning rate decreases with model size and approximately follows
\begin{equation}
    \mathrm{LR} \propto 2^{-0.25 \log_2(N)} .
\end{equation}
Therefore, we use the following dense-training learning-rate rule:
\begin{equation}
    \mathrm{LR}_{\mathrm{dense}}(N)
    =
    2^{-8.525} \times 2^{-0.25\log_2(N)} .
\end{equation}
Equivalently,
\begin{equation}
    \mathrm{LR}_{\mathrm{dense}}(N) \propto N^{-0.25}.
\end{equation}

\paragraph{Sparse training learning rate.}
Next, we examine whether the dense-training learning rate can be directly
applied to dynamic sparse training. We conduct sparse training experiments on
the Llama-1.92B model with sparsity levels $S=0.75$ and $S=0.875$. For dense
training, the selected learning rate for Llama-1.92B is $2^{-11}$. We then
compare this dense learning rate with scaled-up learning rates. The results are
shown in Table~\ref{tab:sparse_lr_sweepB}.

\begin{table}[t]
\centering
\caption{Learning-rate sweep for dynamic sparse training of LLaMA-1.92B on 5.2B unique tokens.
Lower perplexity is better.}
\label{tab:sparse_lr_sweepB}
\begin{tabular}{llll}
\toprule
Sparsity $(S)$ & Learning rate & Perplexity \\
\midrule
0.75  & $2^{-8.75}$ & {17.24} \\
0.75  & $2^{-9} = 2^{-11} \times 4$ & \textbf{17.19} \\
0.75  &  $2^{-9.25}$                     & 17.39 \\
\midrule
0.875 & $2^{-7.75}$ & 18.31 \\
0.875 & $2^{-8} = 2^{-11} \times 8$ & \textbf{18.26} \\
0.875 & $2^{-8.25}$                     & 18.32 \\
\bottomrule
\end{tabular}
\end{table}

For sparsity $S=0.75$, the active parameter ratio is $1-S=0.25$. Scaling the
dense learning rate by the inverse active ratio gives
\begin{equation}
    \frac{1}{1-S} = 4,
\end{equation}
and therefore
\begin{equation}
    2^{-10.75} \times 4 = 2^{-8.75}.
\end{equation}
This learning rate achieves the best perplexity of 18.15.

\vspace{0.3\baselineskip}

Similarly, for sparsity $S=0.875$, the active parameter ratio is $1-S=0.125$.
Scaling the dense learning rate by the inverse active ratio gives
\begin{equation}
    \frac{1}{1-S} = 8,
\end{equation}
and therefore
\begin{equation}
    2^{-10.75} \times 8 = 2^{-7.75}.
\end{equation}
This learning rate achieves the best perplexity of 19.59.

\vspace{0.3\baselineskip}

These results show that the optimal learning rate from dense training cannot
be directly transferred to dynamic sparse training. Instead, the learning rate
should increase as the sparsity level increases. Empirically, the scaling
factor is approximately the inverse active parameter ratio, i.e.,
$1/(1-S)$.

\vspace{0.3\baselineskip}

Therefore, for dynamic sparse training, we use
\begin{equation}
    \mathrm{LR}_{\mathrm{sparse}}(N,S)
    =
    \frac{\mathrm{LR}_{\mathrm{dense}}(N)}{1-S},
\end{equation}
where
\begin{equation}
    \mathrm{LR}_{\mathrm{dense}}(N)
    =
    2^{-8.525} \times 2^{-0.25\log_2(N)} .
\end{equation}
Combining the two equations gives the final learning-rate rule:
\begin{equation}
    \mathrm{LR}_{\mathrm{sparse}}(N,S)
    =
    \frac{
    2^{-8.525} \times 2^{-0.25\log_2(N)}
    }{
    1-S
    }
\end{equation}
where $N$ is the model size and $S$ is the sparsity level.

\section{Fitting procedure}

\subsection{Fitting on unique data without repetition}

First, we use the validation loss values for both the dense and sparse models, evaluated on \textbf{unique, non-repeating data}, to fit the scaling law for the sparse and dense model:

$$
L(N, D, S) \;= \frac{A\big((1-S)^{\epsilon} + P S^{\mu}\big)}{N^{\alpha}} +\; \frac{B}{D^\beta} \;+\; E
$$

Here:
\begin{itemize}
    \item $N$ is the number of non-zero parameters,
    \item $D$ is the number of unique training tokens,
    \item $A, B, E, P, \epsilon, \mu, \alpha, \beta$ are fitting constants.
\end{itemize}

We fit and validate the scaling law using samples from dense and sparse models trained on unique, non-repeating data, spanning model sizes from 15M to 3.84B parameters, sparsity levels from 0.0 to 0.9375, and training tokens from 1.3B to 41.6B.

\vspace{0.3\baselineskip}

The fitted parameters are:

$$
A, B, E, \alpha, \beta, \epsilon, \mu,  P = [5.24073718, 6.10303089,  0.85355013,  0.32682177, 0.32682177, -0.00968044,  0.84661793, -0.27288932]
$$

The resulting fit achieves a loss of $ 1.36 \times 10^{-3}$ and an $R^2$ score of 98.3\%.

\subsubsection{Optimal model and data scaling under sparsity}

\textbf{Sparse FLOPs.} Under a compute budget that counts only non-zero operations, we assume
\begin{equation}
C = 6ND.
\end{equation}
Substituting $D = C/(6N)$ into the loss gives
\begin{equation}
L(N,S)
=
A\big((1-S)^{\varepsilon} + P S^{\mu}\big) N^{-\alpha}
+
B 6^{\beta} C^{-\beta} N^{\beta}
+
E.
\end{equation}

Let
\begin{equation}
F(S) = (1-S)^{\varepsilon} + P S^{\mu},
\qquad
G = \left(\frac{\alpha A}{\beta B}\right)^{\frac{1}{\alpha+\beta}}.
\end{equation}
Minimizing with respect to $N$ yields the closed-form optima
\begin{equation}
N^*(S) = G (C/6)^{\frac{\beta}{\alpha+\beta}} F(S)^{\frac{1}{\alpha+\beta}},
\qquad
D^*(S) = G^{-1} (C/6)^{\frac{\alpha}{\alpha+\beta}} F(S)^{-\frac{1}{\alpha+\beta}},
\end{equation}
which satisfy the sparse FLOPs constraint $N^*D^* = C/6$.

\vspace{0.3\baselineskip}

These expressions show that sparsity influences the optimal allocation of parameters and data only through the multiplicative factor $F(S)$:
\begin{equation}
N^* \propto F(S)^{\frac{1}{\alpha+\beta}},
\qquad
D^* \propto F(S)^{-\frac{1}{\alpha+\beta}}.
\end{equation}

Importantly, sparsity does not change the scaling exponents with respect to compute:
\begin{equation}
N^* \propto C^{\frac{\beta}{\alpha+\beta}},
\qquad
D^* \propto C^{\frac{\alpha}{\alpha+\beta}}.
\end{equation}
Sparsity only modifies the constant prefactors through $F(S)$, while the fundamental Chinchilla-style scaling with compute remains unchanged.

\vspace{0.3\baselineskip}

\textbf{Dense-FLOPs budget.} If compute is instead measured in dense FLOPs,
\begin{equation}
C = \frac{6ND}{1-S},
\end{equation}
then substituting $D = \tfrac{C(1-S)}{6N}$ into the loss and minimizing with respect to $N$ yields

\begin{equation}
N^* = G (C/6)^{\frac{\beta}{\alpha+\beta}}
(1-S)^{\frac{\beta}{\alpha+\beta}}
F(S)^{\frac{1}{\alpha+\beta}},
\end{equation}

\begin{equation}
D^* = G^{-1} (C/6)^{\frac{\alpha}{\alpha+\beta}}
(1-S)^{-\frac{\alpha}{\alpha+\beta}}
F(S)^{-\frac{1}{\alpha+\beta}},
\end{equation}

where
\[
F(S) = (1-S)^{\varepsilon} + P S^{\mu},
\qquad
G = \left(\frac{\alpha A}{\beta B}\right)^{\frac{1}{\alpha+\beta}}.
\]

Letting $a=\beta/(\alpha+\beta)$ and $b=\alpha/(\alpha+\beta)$, we can write

\begin{equation}
N^* = G (C/6)^a (1-S)^a F(S)^{\frac{1}{\alpha+\beta}},
\qquad
D^* = G^{-1} (C/6)^b (1-S)^{-b} F(S)^{-\frac{1}{\alpha+\beta}}.
\end{equation}

When $S=0$ (dense training), $F(S)=1$, and these expressions reduce exactly to the standard Chinchilla scaling laws.

\subsection{Fitting repeating data}

\textbf{Fitting dense training.} Secondly, we use the loss values for the dense model, evaluated on repeating data, to fit the parameters $R_D$ and $R_N$ in the scaling law for dense models with repeated data.
In this step, the parameters $A, B, E, \alpha, \beta, \epsilon, \mu,  P$ are fixed from the previous fitting stage:

$$A, B, E, \alpha, \beta, \epsilon, \mu,  P = [5.24073718, 6.10303089,  0.85355013,  0.32682177, 0.32682177, -0.00968044,  0.84661793, -0.27288932]$$

The scaling law for dense training is given by:

$$
L(N', D', S) \;=\; \frac{A\big((1-S)^{\epsilon} + P S^{\mu}\big)}{N'^{\alpha}} \;+\; \frac{B}{D'^{\beta}} \;+\; E
$$

Here, the effective dataset size $D'$ and effective model size $N'$ are computed as:

$$
D' = U_d + U_d \cdot R_d^* \left( 1 - e^{-\frac{R_d}{R_d^*}} \right),
$$

$$
N' = U_n + U_n \cdot R_n^* \left( 1 - e^{-\frac{R_n}{R_n^*}} \right),
$$

where $U_d$ and $U_n$ represent the number of unique tokens and unique parameters, respectively, and $R_d$ and $R_n$ denote the number of repeated tokens and repeated parameters. We fit the parameter $R_d^*$ and $R_n^*$ for dense training with repetition up to 16 epochs.

\vspace{0.3\baselineskip}

The values of the parameters after fitting are
\[
R_d^*, R_n^* = [11.08763712, 4.40474882].
\]
The resulting fit achieves a loss of $ 5.03\times 10^{-3}$ and an $R^2$ score of 98.2\%.

\vspace{0.3\baselineskip}

\textbf{Fitting sparse training.} Then, we use the loss values for the sparse model, evaluated on repeating data, to fit the parameters $\lambda_1$, $\sigma_1$, $\lambda_2$ and $\sigma_2$ in the scaling law for sparse models with repeated data. In this step, the parameters $A, B, E, \epsilon, \alpha, \beta, R_D^*, R_N^*$ are fixed from the previous fitting stage:

$$
\begin{aligned}
A, B, E, \alpha, \beta, \epsilon, \mu,  P, R_d^*, R_n^* \\
= [5.24073718, 6.10303089, 0.85355013, 0.32682177, 0.32682177, 
\\-0.00968044, 0.84661793, -0.27288932, 11.08763712, 4.40474882]
\end{aligned}
$$

The scaling law is defined as:

$$
L(N', D', S) \;=\; \frac{A\big((1-S)^{\epsilon} + P S^{\mu}\big)}{N'^{\alpha}} \;+\; \frac{B}{D'^{\beta}} \;+\; E
$$

where the effective dataset size $D'$ and effective model size $N'$ are given by:

$$
D' = U_d + U_d \cdot R_d^*(S) \left( 1 - e^{-\frac{R_d}{R_d^*(S)}} \right),
$$

$$
N' = U_n + U_n \cdot R_n^*(S) \left( 1 - e^{-\frac{R_n}{R_n^*(S)}} \right),
$$

$$
    R_d^*(S) = R_d^* \left( 1 + \lambda_1 S + \sigma_1 S^2 \right),
    R_n^*(S) = R_n^* \left( 1 + \lambda_2 S + \sigma_2 S^2 \right),
$$we then fit $\lambda_1$, $\sigma_1$, $\lambda_2$ and $\sigma_2$ using results from models with sparsity levels from 0.0 (dense) to 0.9375, with training epoch up to 16. The values of the parameters after fitting are
$$
\lambda_1, \sigma_1, \lambda_2, \sigma_2 = [1.82159486, -1.36557887, 1.90420893, -2.79936732]
$$
The resulting fit achieves a loss of $1.15 \times 10^{-3}$ and an $R^2$ score of $98.2\%$.

\label{sec:Fittingprocedure}

\subsection{Formulations Explanation}
\label{sec:Formulation}
\subsubsection{Dependence of the Loss Gap on the Amount of Unique Data}

We analyze how the loss gap between training on unique and repeated data scales with the amount of unique data $U_d$ under the previous scaling law \cite{muennighoff2023scaling}.

We consider the standard data-dependent term of the scaling law,
\begin{equation}
L(N, D) = \frac{A}{N^{\alpha}} + \frac{B}{D^{\beta}} + E,
\end{equation}
and define the loss gap as
\begin{equation}
\Delta L \equiv L_{\mathrm{rep}} - L_{\mathrm{uniq}} .
\end{equation}

\paragraph{Unique data.}
When training on unique data, the total number of training tokens is
\begin{equation}
D_{\mathrm{uniq}} = U_d (1 + R_D),
\end{equation}
where $R_D$ denotes the number of data repetitions.

\paragraph{Repeated data.}
Under repeated training, the effective dataset size is given by
\begin{equation}
D'_{\mathrm{rep}}
=
U_d \left[ 1 + R_D^{*} \left( 1 - e^{-R_D / R_D^{*}} \right) \right],
\end{equation}
where $R_D^{*}$ denotes the saturation scale controlling the diminishing returns from repeated exposure.

\paragraph{Loss gap.}
Substituting $D_{\mathrm{uniq}}$ and $D'_{\mathrm{rep}}$ into the data-dependent term yields
\begin{align}
\Delta L
&= B \left(
\frac{1}{(D'_{\mathrm{rep}})^{\beta}}
-
\frac{1}{(D_{\mathrm{uniq}})^{\beta}}
\right) \nonumber \\
&= B U_d^{-\beta}
\left[
\left(
1 + R_D^{*} \left( 1 - e^{-R_D / R_D^{*}} \right)
\right)^{-\beta}
-
(1 + R_D)^{-\beta}
\right].
\end{align}

Since
\begin{equation}
1 + R_D^{*} \left( 1 - e^{-R_D / R_D^{*}} \right) < 1 + R_D
\quad \text{for any } R_D > 0,
\end{equation}
the bracketed term is strictly positive and independent of $U_d$. Therefore, the loss gap satisfies
\begin{equation}
\Delta L \;\propto\; U_d^{-\beta}
\end{equation}
and decreases monotonically as the amount of unique data increases.

\subsubsection{Dependence of the Loss Gap on Parameter Repetition}
\label{app:loss_gap_rn}

We analyze how the loss gap between training on unique and repeated data scales with the model size under the previous scaling law \cite{muennighoff2023scaling}. Under fixed $U_d$ and $U_n$, increasing model size is equivalent to increasing the parameter repetition ratio $R_N$.
Since the model size satisfies $N = U_n(1+R_N)$, analyzing the dependence of the loss gap on $R_N$ is equivalent to analyzing its dependence on the model size $N$.

\vspace{0.3\baselineskip}

Throughout this analysis, we fix the data-side quantities $U_d$ and $R_D$, such that the effective dataset sizes
\begin{equation}
D'_{\mathrm{uniq}} = U_d(1+R_D),
\qquad
D'_{\mathrm{rep}} = U_d + U_d R_D^*\!\left(1-e^{-R_D/R_D^*}\right)
\end{equation}
are constants independent of $R_N$.

\paragraph{Effective parameters.}
The effective number of parameters under repeated training is given by
\begin{equation}
N'_{\mathrm{rep}}
=
U_n + U_n R_N^*\!\left(1-e^{-R_N/R_N^*}\right),
\end{equation}
where $R_N^*$ denotes the saturation scale for parameter repetition.
Under unique training, the model architecture is unchanged and parameters do not undergo repetition saturation.
Therefore, the effective number of parameters is
\begin{equation}
N'_{\mathrm{uniq}} = U_n(1+R_N).
\end{equation}

\paragraph{Loss gap.}
We define the loss gap as
\begin{equation}
\Delta L(R_N)
\;\equiv\;
L_{\mathrm{rep}} - L_{\mathrm{uniq}} .
\end{equation}
Substituting the expressions above yields
\begin{align}
\label{eq:L(R_N)}
\Delta L(R_N)
&=
\Bigg[
\frac{A}{(N'_{\mathrm{rep}})^{\alpha}}
+
\frac{B}{(D'_{\mathrm{rep}})^{\beta}}
\Bigg]
-
\Bigg[
\frac{A}{(N'_{\mathrm{uniq}})^{\alpha}}
+
\frac{B}{(D'_{\mathrm{uniq}})^{\beta}}
\Bigg]
\nonumber\\
&=
A\!\left(
\frac{1}{(N'_{\mathrm{rep}})^{\alpha}}
-
\frac{1}{\bigl(U_n(1+R_N)\bigr)^{\alpha}}
\right)
+
B\!\left(
\frac{1}{(D'_{\mathrm{rep}})^{\beta}}
-
\frac{1}{(D'_{\mathrm{uniq}})^{\beta}}
\right).
\end{align}


As the parameter repetition ratio $R_N$ increases, the effective number of parameters under repeated training, $N'_{\mathrm{rep}}$, grows increasingly slowly and eventually saturates.
In contrast, under unique training the effective parameter count $N'_{\mathrm{uniq}} = U_n(1+R_N)$ continues to grow linearly with $R_N$.
As a result, the parameter-dependent term in Eq. \ref{eq:L(R_N)} becomes increasingly dominated by the second term, leading to a widening loss gap between repeated and unique training as $R_N$ increases.

\subsubsection{The Relationship of the Loss Gap and $R_d^*$}

\begin{proposition}[Effect of the data saturation scale]
Let $U_d > 0$, $R_d > 0$, $R_d^* > 0$, $B > 0$, and $\beta > 0$. Define the effective data under repetition as
\begin{equation}
D'(R_d^*) = U_d + U_d R_d^* \left(1 - e^{-R_d/R_d^*}\right),
\end{equation}
and the corresponding data-dependent loss term as
\begin{equation}
L_{\mathrm{rep}}(R_d^*) = \frac{B}{D'(R_d^*)^\beta}.
\end{equation}
Let the ideal fully-unique-data baseline be
\begin{equation}
L_{\mathrm{uniq}} = \frac{B}{\bigl(U_d(1+R_d)\bigr)^\beta}.
\end{equation}
Then, for fixed $R_d > 0$, the gap
\begin{equation}
G(R_d^*) = L_{\mathrm{rep}}(R_d^*) - L_{\mathrm{uniq}}
\end{equation}
is strictly \emph{decreasing} in $R_d^*$. 
\end{proposition}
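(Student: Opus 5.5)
Since $L_{\mathrm{uniq}}$ does not depend on $R_d^*$, it suffices to show that $L_{\mathrm{rep}}(R_d^*) = B\,D'(R_d^*)^{-\beta}$ is strictly decreasing in $R_d^*$. The map $x \mapsto B x^{-\beta}$ is strictly decreasing on $(0,\infty)$ because $B>0$ and $\beta>0$. Also $D'(R_d^*) \ge U_d > 0$. So the whole claim reduces to a single fact: $D'(R_d^*)$ is strictly increasing in $R_d^*$ for fixed $R_d>0$.

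Write $D'(R_d^*) = U_d\bigl(1 + h(R_d^*)\bigr)$ with $h(r) = r\bigl(1 - e^{-R_d/r}\bigr)$ for $r>0$. Because $U_d>0$, it is enough to show that $h$ is strictly increasing. I will substitute $t = R_d/r \in (0,\infty)$, which gives $h = R_d\,\phi(t)$ with $\phi(t) = (1-e^{-t})/t$. The substitution $t = R_d/r$ is decreasing in $r$, so it suffices to show that $\phi$ is strictly decreasing on $(0,\infty)$.

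This is the only step requiring care, though it is elementary. Differentiating gives
\[
\phi'(t) = \frac{t e^{-t} - (1-e^{-t})}{t^2} = \frac{e^{-t}\bigl(1 + t - e^{t}\bigr)}{t^2}.
\]
This is strictly negative for $t>0$ by the strict inequality $e^{t} > 1+t$, which follows from strict convexity of the exponential. An alternative that avoids differentiation is to write $\phi(t) = \int_0^1 e^{-ts}\,ds$. Here the integrand is strictly decreasing in $t$ for every $s\in(0,1]$, so $\phi$ is strictly decreasing; I would mention this as a cleaner route.

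Chaining the monotonicities then finishes the argument. Increasing $R_d^*$ decreases $t$, which increases $\phi(t)$. That increases $h$ and hence $D'$, which in turn decreases $L_{\mathrm{rep}}$ and therefore strictly decreases $G(R_d^*)$. As a remark, I would add that $h(r) < R_d$, since $\phi(t)<1$ because $1-e^{-t}<t$. Hence $G>0$ throughout, consistent with the earlier gap analysis in this appendix, and $G\to 0$ as $R_d^*\to\infty$.
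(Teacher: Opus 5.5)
Your proof is correct and takes essentially the same route as the paper. Both reduce the claim to strict monotonicity of $D'$ in $R_d^*$ and settle it with $e^{x} > 1+x$. The only differences are cosmetic: you substitute $t = R_d/R_d^*$ before differentiating, whereas the paper differentiates first, and you add a derivative-free variant via $\int_0^1 e^{-ts}\,ds$ plus a correct but unneeded remark that $G>0$ and $G\to 0$ as $R_d^*\to\infty$.
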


\begin{proof}
Fix $R_d > 0$ and let $t = R_d^* > 0$. Define
$$
D'(t) = U_d + U_d\,t\left(1 - e^{-R_d/t}\right).
$$

We first show that $D'(t)$ is strictly increasing in $t$. Consider
$$
\phi(t) = t\left(1 - e^{-R_d/t}\right).
$$
Differentiating,
$$
\phi'(t) = 1 - e^{-R_d/t}\left(1 + \frac{R_d}{t}\right).
$$
Let $x = R_d/t > 0$. Then
$$
\phi'(t) = 1 - e^{-x}(1+x).
$$
Since $e^{-x}(1+x) < 1$ for all $x>0$, we have $\phi'(t) > 0$, so $D'(t)$ is strictly increasing in $t$.

Next, since $\beta > 0$, the function $h(z)=B/z^\beta$ is strictly decreasing for $z>0$. Therefore
$$
L_{\mathrm{rep}}(t) = \frac{B}{D'(t)^\beta}
$$
is strictly decreasing in $t$.

Finally, $L_{\mathrm{uniq}} = B/(U_d(1+R_d))^\beta$ is constant with respect to $t$. Hence
$$
G(t) = L_{\mathrm{rep}}(t) - L_{\mathrm{uniq}}
$$
is strictly decreasing in $t = R_d^*$.
\renewcommand{\qedsymbol}{}
\end{proof}

\section{Sensitivity Analysis}
\label{sec:SensitivityAnalysis}

To assess the robustness of the optimal-sparsity predictions, we perform a
leave-one-density sensitivity analysis. Specifically, we repeatedly refit the scaling law after removing all training runs corresponding to one sparsity level from the fitting set. For each refitted model, we recompute the Iso-FLOPs curves, the dense reference optimum, the compute-optimal sparse operating point, and the loss-optimal sparsity. This produces a collection of fitted predictions, each corresponding to a slightly different training subset. We then summarize these predictions by plotting the mean loss curve across refits and using shaded bands to indicate the standard deviation across refitted curves.

\vspace{0.3\baselineskip}

In the figure, the uncertainty bands are computed from mean-centered loss curves.
That is, for each refit we subtract the average loss of the corresponding curve
before computing the standard deviation across refits. This removes global
vertical offsets between fitted models and emphasizes uncertainty in the
\emph{shape} of the sparsity--loss curve. As a result, the bands should be
interpreted as shape sensitivity rather than absolute prediction uncertainty.
This distinction is important because different refits may shift the overall
loss level while preserving a similar sparsity-dependent trend.

\begin{figure*}[!htb]
    \centering
    \includegraphics[width=\textwidth]{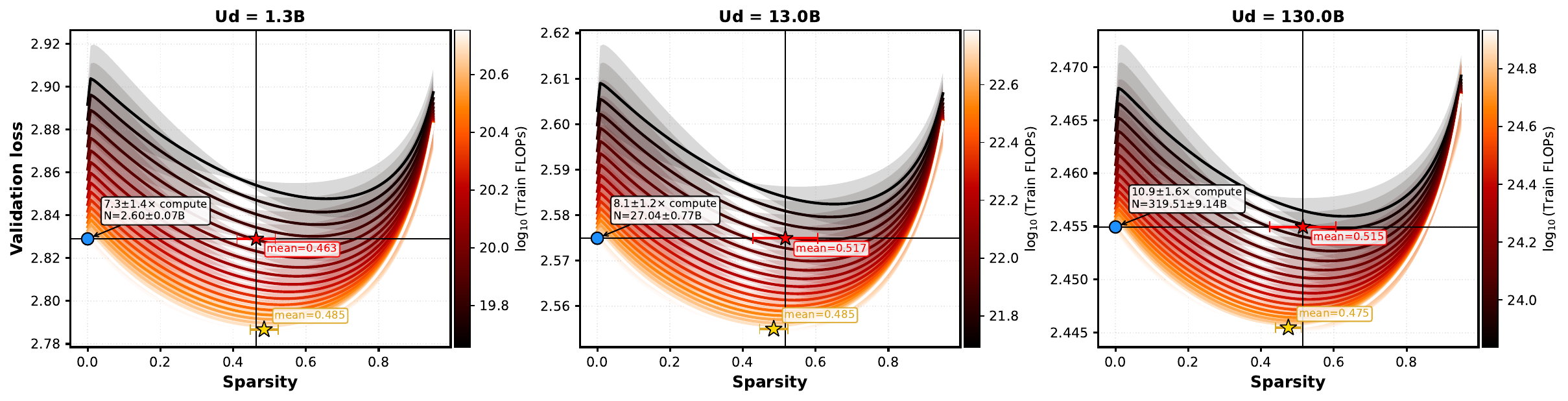}
    \caption{Optimal sparsity under fixed unique-data budgets of \(U_d=1.3\mathrm{B}\), \(13\mathrm{B}\), and \(130\mathrm{B}\) tokens.
Curves show validation loss across sparsity along Iso-FLOPs contours, with color indicating \(\log_{10}\) training FLOPs and shaded bands showing leave-one-density sensitivity.
The blue point is the dense reference optimum, the red star is the compute-optimal sparse configuration matching dense performance, and the yellow star is the loss-optimal sparse configuration.
Moderate sparsity consistently yields dense-equivalent performance with fewer FLOPs.
}
\end{figure*}

\vspace{0.3\baselineskip}

The sensitivity results show that the predicted curves are most stable in the
moderate-sparsity region, where both the compute-optimal and loss-optimal
operating points are located. In contrast, the uncertainty bands become wider
near the boundaries, especially close to dense training and extreme sparsity.
This behavior is expected: boundary regions are more sensitive to the removal of
individual sparsity levels and are more strongly affected by the curvature terms
in the fitted sparsity-dependent functions. Importantly, despite the increased sensitivity near the boundaries, the main conclusion remains stable across refits: moderate sparsity consistently provides
the best trade-off between data reuse and model capacity, while very high
sparsity leads to diminishing returns. The compute-saving trend is also robust:
across refits and data budgets, the compute-optimal sparse operating point
matches the dense reference loss with roughly \(7\)--\(11\times\) fewer training
FLOPs, indicating that the observed efficiency gain is not an artifact of a
single fitted model.

\section{Limitations}
\label{sec:limitation}


To our knowledge, this is the first systematic empirical study of dynamic sparse training (DST) under data-constrained scaling. Nevertheless, several limitations remain.

\paragraph{Scale and Data Diversity.}
Our experiments span a broad range of sparsity levels, model sizes, and data budgets, but the largest models remain below the scale of frontier LLMs. Additionally, we focus primarily on the C4 dataset. Different domains (e.g., code or multilingual data) may exhibit different repetition dynamics, which could shift quantitative saturation behavior. Extending the study to larger model scales and more diverse data sources would require substantially greater computational resources and is therefore left for future work.

\paragraph{System and Practical Efficiency.}
Our compute analysis distinguishes between sparse FLOPs and dense-equivalent FLOPs. In practice, hardware and system inefficiencies may reduce the realizable speedups from sparsity. Bridging the gap between algorithmic and hardware-efficient sparsity remains an open challenge.

\paragraph{Theoretical explanation.}
The proposed sparsity-aware scaling law is phenomenological and chosen for empirical fit and interpretability. While it captures observed trends well, it does not yet provide a theoretical explanation for how sparsity alters effective data or parameter saturation.

Overall, our work provides an initial characterization of sparse data-constrained scaling, and we hope it motivates further theoretical, empirical, and systems-level investigation.

\end{document}